\theoremstyle{plain}
\newtheorem{theorem}{Theorem}[section]
\newtheorem{lemma}[theorem]{Lemma}
\newtheorem{corollary}[theorem]{Corollary}
\newtheorem{property}[theorem]{Property}
\theoremstyle{definition}
\newtheorem{definition}[theorem]{Definition}
\theoremstyle{remark}
\newcommand{\leqcup}{\rotatebox[origin = cc]{-90}{$\geq$}}                                                          
\begin{document}


\title{Predictability maximization and the origins of word order harmony}

\iftoggle{anonymous}{}
{ 
  \author{
  \name{Ramon Ferrer-i-Cancho\thanks{CONTACT Ramon Ferrer-i-Cancho. Email: rferrericancho@cs.upc.edu}}
  \affil{Quantitative, Mathematical and Computational Linguistics Research Group, Departament de Ci\`encies de la Computaci\'o, Universitat Polit\`ecnica de Catalunya (UPC), Barcelona, Spain. ORCiD: 0000-0002-7820-923X}
  }
}

\maketitle

\begin{abstract}
We address the linguistic problem of the sequential arrangement of a head and its dependents from an information theoretic perspective. In particular, we consider the optimal placement of a head that maximizes the predictability of the sequence. We assume that dependents are statistically independent given a head, in line with the open-choice principle and the core assumptions of dependency grammar. We demonstrate the optimality of harmonic order, i.e., placing the head last maximizes the predictability of the head whereas placing the head first maximizes the predictability of dependents. We also show that postponing the head is the optimal strategy to maximize its predictability while bringing it forward is the optimal strategy to maximize the predictability of dependents. We unravel the advantages of the strategy of maximizing the predictability of the head over maximizing the predictability of dependents. Our findings shed light on the placements of the head adopted by real languages or emerging in different kinds of experiments.
\end{abstract}

\begin{keywords}
predictability maximization; optimal head placement; open-choice principle; word order harmony
\end{keywords}

\section{Introduction}


 
Languages exhibit a tendency to put heads before or after their dependents across distinct scales in the sentence. 
Orders where the head is placed first or last with respect to their dependents are called harmonic \citep{Biberauer2013a,Dryer2018a}. Among the distinct notions of harmonicity, here we adopt the basic notion that does not take into account how dependents are ordered within the phrase, across kinds of phrases and levels of organization in the hierarchy of the syntactic structure of a sentence \citep{Biberauer2013a}.

On top of the clause, consider the case of the ordering of the verb (i.e. the head) and subject and the object (i.e. the complements). Various sources of evidence suggest a preference for placing the verb last:
\begin{enumerate}
\item
Let us consider the sequence formed by subject (S), object (O), and verb (V).
The most frequent dominant word order in world languages is SOV (head last) \citep{wals-84,Hammarstroem2016a}. 
The fact that the number of languages increases as the verb moves from the beginning to the end of the sequence (Table \ref{word_order_statistics_table}) suggests that postponing the verb (the head) is favoured for some reason. The effect is more prominent when the frequency of a dominant order is measured in families (Table \ref{word_order_statistics_table}), which is more statistically robust than counts in languages. \footnote{Counts in families control for linguistic relatedness and is a more balanced way of sampling over the diversity of world languages. } 
\item
Gesturing experiments \citep{Goldin-Meadow2008a,Langus2010a,Meir2010b,Hall2013a,Hall2014a,Futrell2015a} 
\footnote{In this context, it is often cited \citet{Gibson2013a} but this work only reports on OV versus VO order, not the triple formed by S, V and O. }
where a robust strong preference for an order consistent with SOV (head last) was found in signers even when their spoken language did not have SOV as the dominant word order. 
\item
SOV has been hypothesized to arise in early stages in the evolution of spoken languages \citep{Gell-Mann2011a,Newmeyer2000}. In addition, SOV is often the dominant order found in sign languages which are at the early stages of community-level conventionalisation \citep{Meir2010a, Sandler2005a}.
\item
{\em In silico} experiments with neural networks have shown that SOV (head last) is the word order that emerges when languages are selected to be more easily learned by networks predicting the next element in a sequence \citep{Reali2009a}. 
\end{enumerate}
At a lower level of the clause, consider the case of a noun phrase: 
\begin{enumerate}
\item
In world languages, nouns tend to be placed harmonically, namely either first or last in a noun phrase formed by noun, adjective, numeral and demonstrative 
\iftoggle{anonymous}{\citep{Ferrer2023b_fake}}{\citep{Ferrer2023b}.}
\item
Artificial language learning experiments have reported that harmonic noun phrase word orders are preferred over non-harmonic orders by English-speaking adults and children \citep{Culbertson2012a,Culbertson2015a,Culbertson2017a}. Crucially, the finding holds even for speakers whose native language is not harmonic \citep{Culbertson2020a,Martin2024a}.
\end{enumerate}

\begin{table}
\caption{\label{word_order_statistics_table} 
The frequency of the placement of the verb in the triple formed by subject, verb and object in world languages showing a dominant word order. Source indicates the publication where the absolute frequencies are borrowed from. Unit indicates the unit of measurement of frequency: number of languages having a certain order as dominant or the number families where a certain order is the majority dominant order.   
Order indicates the position of the verb in the triple: 1 for verb initial (VSO and VOS), 2 for central verb placements (SVO and OVS) and 3 for verb final orderings (SOV and OSV).
}
\begin{center}
\begin{tabular}{@{}llrrr}
Source & Unit & Order & Frequency & Percentage \\
\hline
\citet{wals-84} & languages & 1 &  111 & 10.5 \\
                   &           & 2 &  444 & 42.0 \\
                   &           & 3 &  501 & 47.4 \\
                   &           & Total & 1056 & \\
\citet{Hammarstroem2016a} & languages & 1 &  677 & 13.2 \\
                          &           & 2 & 2157 & 41.3 \\
                          &           & 3 & 2294 & 44.7 \\
                          &           & Total & 5128 & \\
\citet{Hammarstroem2016a} & families  & 1 &   42 & 12.4 \\
                          &           & 2 &   58 & 17.0 \\
                          &           & 3 &  240 & 70.6 \\
                          &           & Total &  340 & \\                                           
\end{tabular}
\end{center}
\end{table}


Here we aim to investigate the problem of the optimal placement of a head and its dependents according to predictability, namely, the ease of prediction of next element in a sequence 
\iftoggle{anonymous}{\citep{Kamide2008a, Ferrer2013f_fake, Onnis2022a}.}{\citep{Kamide2008a, Ferrer2013f, Onnis2022a}.} 
We will provide some information theoretic foundations for the study of the ordering of heads from a predictability maximization perspective. We will show that (a) harmonic orders and (b) postponing the head and eventually placing it last, as the statistics of word orders suggest (Table \ref{word_order_statistics_table}), is supported by a principle of maximum predictability.

The remainder of the article is organized as follows. Section \ref{predictability_section} presents the information theoretic framework. Assuming that dependents are statistically independent given a head, it shows that a harmonic placement of the head is optimal. In particular, placing the head first maximizes the predictability of the dependents that will follow whereas placing the head last maximizes the predictability of the head from the preceding dependents. It also shows that the head needs to be produced as soon as possible if the predictability of each of the pending dependents has to be maximized while it must be postponed to maximize its predictability. This section is written to accommodate readers of increasing levels of mathematical competence or mathematical aspirations. Mathematically naive readers should just read up to section \ref{subsec:conditional_independence} and the corollaries. Readers with some mathematical background or noble aspirations should be able to understand the statements of the theorems (not their proofs). The remainder of readers should be able to understand it all. 
Section \ref{discussion_section} discusses the advantages of head predictability maximization over dependent predictability maximization. It also discusses the implications of our theoretical results for the placements of the head adopted by real languages or emerging in different kinds of experiments. 


\section{Placement of the head that maximizes predictability}

\label{predictability_section}

For simplicity, our theoretical setting assumes a sequence containing a head $l$ and $n$ dependents, $m_1,...,m_i,...,m_n$. Our framework is flexible and the head and its dependents can be individual words or phrases. 
Thus our notion of dependent comprises either an individual word (the dependent word in a dependency grammar setting) or a constituent (a dependent word and all the words in its syntactic dependency subtree). \footnote{For simplicity, we assume projectivity so that constituents are continuous. } We use the term dependent as an umbrella for sophisticated terminology from theoretical syntax such as modifiers, complements or adjuncts. \footnote{We do not need such a complex terminology here. The purpose of such terminology is fine-grained description. Instead, our emphasis is on explanation and prediction by means of a parsimonious approach.} 
For instance, the English phrase ``a black cat'' has a head (the noun ``cat'') and two dependents (the determiner ``a'' and the adjective
``black''). The sentence ``My friend wrote a book last year'' has a head (the verb ``wrote'') and three dependents (in a constituency sense, the dependents are ``My friend'' as subject, ``a book'' as direct object and ``last years'' as adjunct). The phrase ``a black cat'' is harmonic because all dependents precede the noun. The sentence ``My friend wrote a book last year'' is disharmonic because the verb is surrounded by its dependents. We will refer to the head or the dependents as elements.\footnote{We use the term element to remain agnostic on the syntactic framework, e.g., constituency-based versus dependency-based. } Thus our sequences are formed by $n + 1$ elements. Suppose that the elements are produced sequentially. We aim to investigate if there are head placements that maximize the predictability of elements that have not yet appeared yet. 

Consider that $k$ elements have already been produced. Here we will approach the problem of maximizing the predictability of elements that have not appeared.
We will consider two kinds of prediction problems:
\begin{itemize}
\item
{\em Predictability of the remainder.}\\ 
Imagine that $k$ elements have been produced 
and the predictability of all the remaining $n+1-k$ elements must be maximized. 
We will focus on two cases: $k=1$ (just one
element has already been produced) and $k=n$ (just one element has still
to be produced). 
\item
{\em Predictability of one of the pending elements.}\\
Imagine that $k$ elements have been produced 
and the predictability of one of the pending elements must be maximized individually. 
When $k = n$, predicting one of the pending elements is equivalent to predicting the remainder of the sequence.
This scenario (for an arbitrary $k$) is specially suitable for the {\em in silico} experiments where the next element of a sequence has to be guessed from the elements that have already been produced \citep{Reali2009a}. The power and the interest of that scenario has increased significantly after the success of large language models (LLMs), that are trained to predict the next word or the next element in a broader sense, e.g., the next sentence (see \citet{Yu2024a} and references therein).
\end{itemize}

\subsection{The fundamental assumption: conditional statistical independence between dependents given a head}
\label{subsec:conditional_independence}

Regardless of whether the goal is predicting the remainder or one of the pending elements, our key assumption is that a head and a dependent statistically depend on each other but dependents are statistically independent given a  head. 
We refer to the latter assumption as conditional independence \citep{Ash1965}. This assumption is totally in line with a core assumption of dependency grammar, namely that the bulk of syntactic interactions in a sentence is captured by head-dependent relationships and thus dependent-dependent relations can be neglected \citep{Melcuk1988}. In other words, conditional independence between the dependents of a head is visually equivalent to not adding syntactic dependency arrows between these dependents in a syntactic dependency structure.
For instance, when the head is transitive verb, it means that this verb specifies which words can play the role of subject or object but the subject and object are statistically independent for this verb. Put differently, we assume that head-modifier (or head-complement) relationships are restrictive but dependent-dependent relationships mediated by
heads are flexible. The conditional independence assumption implies abstracting away from the context a sentence is produced. For instance, in the sentence ``John eats apples'', conditional independence implies that given the verbal head ``eats'', the two dependents, ``John'' and ``apples'' are statistically independent. However, if a particular context was considered, it may turn out that liking apples is grammatically possible but unlikely to be produced for the particular person represented by ``John'' because he does not like apples. The conditional independence assumption can be seen as equivalent to assuming that heads and dependents are chosen according to high level abstractions of a language such as grammars and dictionaries, that abstract away to a large extent from the context of use. Therefore, the conditional independence can also be seen as a specific mathematical formalization of the open-choice principle in linguistics \citep{Sinclair1991a}. As Sinclair puts it \citep[p. 175]{Sinclair1991a}, {\em ``In many descriptions of language -- grammars and dictionaries specially -- words are treated as independent items of meaning. Each of them represents a separate choice. Collocations, idioms and other exceptions to that principle are given lower status in the descriptions''.}

\subsection{Predictability as mutual information}

The concept of predictability can be easily defined precisely using standard information theory. We regard $I(X; Y)$, the mutual information between $X$ and $Y$, as a measure of the predictability of $X$ or $Y$. Our view of mutual information as a predictability measure is a restatement of the standard interpretation of $I(X; Y)$ as a reduction in the uncertainty of $X$ due to the knowledge about $Y$ or vice versa, the reduction of the uncertainty of $Y$ due to knowledge about $X$ \citep{Cover2006a}. In plain words, $I(X; Y)$ measures to what extent the values of $X$ behave like identifiers (IDs) of values of $Y$ or vice versa, to what extent the values of $Y$ behave like IDs of values of $X$. Mutual information is a measure of closeness to a one-to-one mapping between the values of $X$ and the values of $Y$. 

We will apply mutual information to measure the predictability of elements of the sequence that have not yet appeared once $k$ elements have already been placed. 
The trick is simple. We will adopt the convention that $X$ stands for the $k$ elements that have appeared and $Y$ stands for elements that have not appeared yet.  
Although mutual information is symmetric ($I(X; Y) = I(Y; X)$), we will adopt the convention that, when we write $I(X, Y)$, the random variable $X$ corresponds to the elements that have already appeared and the random variable $Y$ corresponds to elements to appear. Then $(X,Y)$ is the random variable for the whole sequence. We define $L$ as the random variable for the head. We define $M_i$ as the random variable for the $i$-th dependent appearing in the sequence, with $1 \leq i \leq n$. We use $M_i^j$ to refer to the sequence of random variables $M_i, M_{i+1},...M_{j-1}, M_j$. 

To investigate the problem of the predictability of the remainder when $k$ elements have been produced, we will proceed as follows. If the head has not appeared yet, $X = M_1^k$ and 
$Y = L, M_{k+1}^{n}$. In our mathematical arguments, where exactly the head has appeared among the $k$ first elements or where the head will appear among the $n + 1 - k$ elements to come is irrelevant. Then $Y = L, M_{k+1}^{n} = M_{k+1}^{n}, L$. 
If the head has already appeared, $X = L, M_1^{k-1}$ and $Y = M_{k}^n$. Again  $X = L, M_1^{k-1} = M_1^{k-1}, L$ since where exactly the head has appeared is irrelevant for our arguments.  
As a result of the conventions above, $I(M_1^n; L)$ is the mutual information between (a) $n$ dependents that have already been produced and (b) the head to appear; 
$I(L,M_1^{n-1}; M_n)$ is the mutual information between (a) the elements that have already appeared, i.e. the head and $n-1$ dependents, and (b) the last dependent to appear.

For the problem of the predictability of a pending element when $k$ elements have already been produced, $X$ has the same possible definitions as in the predictability of the remainder problem 
while $Y$ varies. If the head has not appeared yet, $Y = M_j$ with $k + 1 \leq j \leq n$ or $Y = L$.    
If the head has already appeared, $Y = M_j$ with $k \leq j \leq n$.

\subsection{A quick overview of the relevant information theory. }
\label{subsec:quick_overview}

This subsection is just intended to help readers to understand the mathematical proofs that will follow if they are not familiar with information theory. Readers can skip this subsection if they do not want to follow these proofs. 

Consider three random variables $X$, $Y$ and $Z$. 
Mutual information can be conditional. $I(X; Y | Z)$ is the conditional mutual information between $X$ and $Y$ given $Z$.
Mutual information satisfies various properties that are useful for the present article \citep{Cover2006a}: 
\begin{enumerate}
\item
Non-negativity. $I(X; Y) \geq 0$ with equality if and only if $X$ and $Y$ are statistically independent. 
\item
Symmetry. $I(X; Y) = I(Y; X)$.
\item
The chain rule of information, namely
if $X$ decomposes into $X_1$ and $X_2$, namely $X = X_1, X_2$, then 
\begin{eqnarray*}
I(X_1,X_2; Y) & = & I(X_1; Y) + I(X_2; Y | X_1) \\
              & = & I(X_2; Y) + I(X_1; Y | X_2)
\end{eqnarray*}
where $I(X_2; Y | X_1)$ and $I(X_1; Y | X_2)$ are conditional mutual informations. 
\end{enumerate}
We also need to remind properties involving Markov chains. Suppose that random variables $X$, $Y$ and $Z$ form a Markov chain in that order, a property we write as $X \rightarrow Y \rightarrow Z$. 
Then $X$, $Y$ and $Z$ satisfy the following mathematical properties \citep{Cover2006a}
\begin{enumerate}
\item
Conditional independence. $X \rightarrow Y \rightarrow Z$ is equivalent to $X$ and $Z$ being conditionally independent given $Y$.
\item
$I(X; Z | Y) = 0$ thanks to the conditional independence between $X$ and $Y$.  
\item
$Z \rightarrow Y \rightarrow X$ also forms a Markov chain. 
\item
Data processing inequality.  
\begin{equation*}
I(X; Y) \geq I(X; Z)
\end{equation*}
with equality if and only if $I(X; Y| Z) = 0$ (i.e. $X \rightarrow Z \rightarrow Y$ forms a Markov chain). 
\begin{equation*}
I(Y; Z) \geq I(X; Z)
\end{equation*}
with equality if and only if $I(Y; Z| X) = 0$ (i.e. $Y \rightarrow X \rightarrow Z$ forms a Markov chain).
\end{enumerate}

\subsection{Conditional statistical independence} 

The following definition presents the core assumption of the theoretical results on predictability that will be introduced progressively.
Let $l$ be a value that $L$ can take and $m_i$ be a value that $M_i$ can take. Accordingly, $m_i^j = m_i, m_{i+1},...,m_{j-1}, m_j$ is a value that $M_i^j$ can take. 

\begin{definition}[Conditional statistical independence of dependents given the head]
\label{conditional_independence_definition}

Let $p(m_i^j | l)$ be the conditional probability of $M_i^j = m_i^j$ given $L=l$. 
Let $i, j, i', j'$ be integer numbers such $[i,j],[i',j'] \subseteq [1, n]$ and $[i,j] \cap [i',j'] = \emptyset$.  
Then $M_i^j$ and $M_{i'}^{j'}$ are conditionally independent given $L$ iff 
\begin{equation}
p(m_i^j, m_{i'}^{j'} | l) = p(m_i^j| l) p(m_{i'}^{j'}|l).
\label{conditional_independence_equation}
\end{equation} 
\end{definition}

The definition of conditional statistical independence above has various implications.

\begin{property}
Definition \ref{conditional_independence_equation} implies pairwise conditional independence of dependents given a head, i.e.
\begin{equation}
p(m_i, m_j | l) = p(m_i | l ) p(m_j | l)
\label{eq:pairwise_conditional_independence}
\end{equation} 
for $i,j \in [1, n]$ and $i\neq j$.
That definition also implies
\begin{equation}
p(m_i^j | l)= \prod_{h=i}^j p(m_h | l)
\label{eq:conditional_decomposition}
\end{equation}
for $[i,j] \subseteq [1, n]$.
\end{property}
\begin{proof}
Eq. \ref{eq:pairwise_conditional_independence} follows by setting $i=j$ and $i'=j'$ in Eq. \ref{conditional_independence_equation}.  
Eq. \ref{eq:conditional_decomposition} follows by gradually decomposing $m_i^j$ and applying Eq. \ref{conditional_independence_equation} each time. That is
\begin{eqnarray*}
p(m_i^j | l) & = & p(m_i^{j-1}, m_j | l) = p(m_i^{j-1} | l) p(m_j | l) \\
             & = & p(m_i^{j-2}, m_{j-1} | l) p(m_j | l) = p(m_i^{j-2} | l) p(m_{j-1} | l) p(m_j | l) \\
             & = & ... \\
             & = & \prod_{h=i}^j p(m_h | l).
\end{eqnarray*} 
\end{proof}
 
\begin{lemma}[Markov chain]
\label{Markov_chain_lemma}
Under conditional statistical independence of dependents given a head (Definition \ref{conditional_independence_definition}) and
for integers $i, j, i', j'$ such that $1 \leq i < j \leq n$ and $1 \leq i' < j' \leq n$ and $[i,j] \cap [i',j'] = \emptyset$,  
we have that 
\begin{enumerate}
\item
$M_i^j \rightarrow L \rightarrow M_{i'}^{j'}$ and its reverse, i.e. $M_{i'}^{j'} \rightarrow L \rightarrow M_i^j$, form a Markov chain.
\item
\begin{equation*} 
I(M_i^j; M_{i'}^{j'} | L) = 0. 
\end{equation*}
\end{enumerate}
\end{lemma}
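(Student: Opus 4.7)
The plan is to observe that this lemma is essentially a translation of Definition \ref{conditional_independence_definition} into the language of Markov chains and conditional mutual information, using the standard equivalences recalled in Section \ref{subsec:quick_overview}. So the proof should be short and the main work is really just a careful matching of notation.

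First I would invoke Definition \ref{conditional_independence_definition} directly: since $[i,j]$ and $[i',j']$ are disjoint subintervals of $[1,n]$, Eq.~\ref{conditional_independence_equation} gives $p(m_i^j, m_{i'}^{j'} \mid l) = p(m_i^j \mid l)\, p(m_{i'}^{j'} \mid l)$, which is precisely the statement that $M_i^j$ and $M_{i'}^{j'}$ are conditionally independent given $L$. Then I would appeal to the first Markov-chain property listed in Section \ref{subsec:quick_overview} (``$X \to Y \to Z$ is equivalent to $X$ and $Z$ being conditionally independent given $Y$'') to conclude $M_i^j \to L \to M_{i'}^{j'}$. Reversibility of Markov chains (property 3 in the same list) immediately yields $M_{i'}^{j'} \to L \to M_i^j$ as well, covering the first bullet of the lemma.

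For the second bullet, I would use the fact that conditional independence between $M_i^j$ and $M_{i'}^{j'}$ given $L$ forces $I(M_i^j; M_{i'}^{j'} \mid L) = 0$ (which is the non-negativity-with-equality characterization of mutual information applied conditionally, and is also property 2 of Markov chains in Section \ref{subsec:quick_overview}). Alternatively, this can be derived by expanding $I(M_i^j; M_{i'}^{j'} \mid L)$ as an expectation of $\log \bigl[p(m_i^j, m_{i'}^{j'} \mid l) / (p(m_i^j \mid l) p(m_{i'}^{j'} \mid l))\bigr]$ and noting that Eq.~\ref{conditional_independence_equation} makes the argument of the logarithm identically $1$.

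There is no real obstacle here; the statement is almost tautological once one has stated Definition \ref{conditional_independence_definition} and the background facts of Section \ref{subsec:quick_overview}. The only mild care needed is to verify that the definition covers the case of \emph{blocks} $M_i^j$ and $M_{i'}^{j'}$ and not just single variables $M_i, M_{i'}$ — but this is built into Definition \ref{conditional_independence_definition} itself, which is stated for arbitrary disjoint index intervals. So the value of including the lemma is really presentational: it packages the definitional hypothesis as a Markov-chain/zero-conditional-information statement that can then be fed into the data-processing inequality and the chain rule in the subsequent predictability theorems.
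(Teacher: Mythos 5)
Your proof is correct and follows essentially the same route as the paper's: both reduce the lemma to the observation that Definition \ref{conditional_independence_definition} is exactly conditional independence of the two blocks given $L$, and then invoke the standard facts that this is equivalent to the Markov chain $M_i^j \rightarrow L \rightarrow M_{i'}^{j'}$, that Markov chains reverse, and that conditional independence forces the conditional mutual information to vanish. The only difference is one of detail --- the paper verifies the factorization $p(m_i^j, l, m_{i'}^{j'}) = p(m_i^j)\,p(l \mid m_i^j)\,p(m_{i'}^{j'} \mid l)$ by an explicit chain of equalities, whereas you cite the equivalence from Section \ref{subsec:quick_overview} as a known property --- and both are legitimate.
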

\begin{proof}
We prove first the 1st part. Without loss of generality, suppose that $j < i'$. If $M_i^j$, $L$ and $M_{i'}^{j'}$ in that order formed a Markov chain, i. e. $M_i^j \rightarrow L \rightarrow M_{i'}^{j'}$, then  
\begin{equation*}
p(m_i^j, m_i^j, l) = p(m_i^j) p(l | m_i^j) p(m_{i'}^{j'} | l).
\end{equation*}
Such a definition of Markov chain follows from conditional independence, that is 
\begin{eqnarray*}
p(m_i^j, m_{i'}^{j'} | l) & = & p(m_i^j| l) p(m_{i'}^{j'} | l) \quad \quad \text{conditional independence} \\
                          & = & \frac{p(m_i^j, l)}{p(l)} p(m_{i'}^{j'} | l) \quad \quad \text{definition of conditional probability} \\
                          & = & \frac{p(m_i^j) p(l | m_i^j) p(m_{i'}^{j'} | l)}{p(l)} \quad \quad \text{definition of conditional probability} \\                          
                          & = & \frac{p(m_i^j, l, m_{i'}^{j'})}{p(l)} \quad \quad \text{definition of Markov chain}.                      
\end{eqnarray*}
By well-known properties of Markov chains, the reverse of $M_i^j \rightarrow L \rightarrow M_{i'}^{j'}$, namely 
$M_{i'}^{j'} \rightarrow L \rightarrow M_i^j$, also forms a Markov chain (Section \ref{subsec:quick_overview}).

The second part follows trivially because mutual information is zero in case of statistical independence (Section \ref{subsec:quick_overview}). 
\end{proof}

\subsection{Predictability of the remainder of the sequence}

The following theorem indicates the optimal solutions for the predictability of the remainder problem.  

\begin{theorem}[Predictability of the remainder of the sequence]
\label{predictability_theorem}
Assume that the whole sequence is formed by one head and $n\geq 1$ dependents and also that dependents are conditionally independent given a head according to Definition \ref{conditional_independence_definition}.
When $k=1$,
\begin{equation}
I(L; M_1^n) \geq I(M_1; L, M_2^n)
\label{inequality_to_prove1}
\end{equation}
with equality iff 
\begin{itemize}
\item
$n = 1$ or
\item
$n \geq 2$ and $L \rightarrow M_1 \rightarrow M_2^n$ forms a Markov chain.
\end{itemize}
When $k = n$, 
\begin{equation}
I(M_1^{n}; L) \geq I(L, M_1^{n-1}; M_n)
\label{inequality_to_prove2}
\end{equation}
with equality iff 
\begin{itemize}
\item
$n = 1$ or
\item
$n \geq 2$ and $M_1^{n-1} \rightarrow M_n \rightarrow L$ forms a Markov chain.
\end{itemize}
\end{theorem}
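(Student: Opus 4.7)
The plan is to reduce both inequalities to a single mechanism: apply the chain rule of mutual information to each side, then kill one of the resulting conditional mutual informations using Lemma~\ref{Markov_chain_lemma}, and conclude by non-negativity of the remaining term.

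For the first inequality \eqref{inequality_to_prove1}, I would first dispose of the degenerate case $n=1$, where both sides equal $I(L; M_1)$ by symmetry. Assuming $n \geq 2$, I would expand the right-hand side using the chain rule,
\begin{equation*}
I(M_1; L, M_2^n) = I(M_1; L) + I(M_1; M_2^n \mid L),
\end{equation*}
and then invoke Lemma~\ref{Markov_chain_lemma} (with $[i,j]=[1,1]$ and $[i',j']=[2,n]$) to conclude $I(M_1; M_2^n \mid L) = 0$. Hence the right-hand side reduces to $I(M_1; L)$. For the left-hand side, using symmetry $I(L; M_1^n) = I(M_1^n; L)$ and the chain rule in the other direction,
\begin{equation*}
I(M_1^n; L) = I(M_1; L) + I(M_2^n; L \mid M_1).
\end{equation*}
Subtracting yields $I(L; M_1^n) - I(M_1; L, M_2^n) = I(M_2^n; L \mid M_1) \geq 0$, with equality exactly when $M_2^n$ and $L$ are conditionally independent given $M_1$, i.e.\ $L \rightarrow M_1 \rightarrow M_2^n$ forms a Markov chain.

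For the second inequality \eqref{inequality_to_prove2}, the structure is entirely parallel. Again $n=1$ is trivial. For $n \geq 2$, apply the chain rule to the right-hand side,
\begin{equation*}
I(L, M_1^{n-1}; M_n) = I(L; M_n) + I(M_1^{n-1}; M_n \mid L),
\end{equation*}
and use Lemma~\ref{Markov_chain_lemma} (with $[i,j]=[1,n-1]$ and $[i',j']=[n,n]$) to kill the second term. Then expand the left-hand side as
\begin{equation*}
I(M_1^n; L) = I(M_n; L) + I(M_1^{n-1}; L \mid M_n),
\end{equation*}
so that the difference equals $I(M_1^{n-1}; L \mid M_n) \geq 0$, with equality iff $M_1^{n-1} \rightarrow M_n \rightarrow L$ forms a Markov chain.

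The arguments are essentially symmetric and both rest on the same two tools: the chain rule and the conditional independence identity from Lemma~\ref{Markov_chain_lemma}. I do not foresee a real obstacle; the only step requiring care is the bookkeeping for the equality conditions, and in particular verifying that the definition of the Markov chain condition stated in the theorem matches the vanishing of the conditional mutual information that drops out of the chain-rule decomposition, which is immediate from the equivalence between Markov chains and conditional independence recalled in Section~\ref{subsec:quick_overview}.
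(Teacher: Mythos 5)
Your proposal is correct and follows essentially the same route as the paper: chain-rule decomposition of both sides, cancellation of the common term $I(M_1;L)$, annihilation of $I(M_1; M_2^n \mid L)$ via Lemma~\ref{Markov_chain_lemma}, and conclusion by non-negativity with the Markov-chain equality condition (the paper simply dispatches the second inequality ``by symmetry'' where you write it out). No gaps.
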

\begin{proof}
We begin with the proof of Eq. \ref{inequality_to_prove1} (the case $k=1$). 
When $n = 1$, we have equality due to the symmetry property of mutual information. 
When $n > 1$, Eq. \ref{inequality_to_prove1} can be written equivalently as
\begin{equation*}
I(L; M_1, M_2^n) \geq I(M_1; L, M_2^n).
\end{equation*}
By the chain rule of mutual information and the symmetry of mutual information, the previous inequality becomes
\begin{eqnarray*} 
I(L; M_1) + I(L; M_2^n | M_1) \geq I(L; M_1) + I(M_1; M_2^n | L)
\end{eqnarray*}
and then
\begin{eqnarray}
I(L; M_2^n | M_1) \geq I(M_1; M_2^n | L).
\label{inequality_to_proof_after_chain_rule_equation}
\end{eqnarray}
Under conditional statistical independence of dependents (Definition \ref{conditional_independence_definition}),
Lemma \ref{Markov_chain_lemma} indicates that  
$M_1 \rightarrow L \rightarrow M_2^n$ forms a Markov chain and 
\begin{equation*} 
I(M_1; M_2^n | L) = 0.
\end{equation*}
Then Eq. \ref{inequality_to_proof_after_chain_rule_equation} finally becomes
\begin{equation*}
I(L; M_2^n | M_1) \geq 0
\end{equation*}
with equality if and only if $L \rightarrow M_1 \rightarrow M_2^n$ forms a Markov chain (Lemma \ref{Markov_chain_lemma}).

The proof of Eq. \ref{inequality_to_prove2} follows by symmetry from the proof of Eq. \ref{inequality_to_prove1}. 
\end{proof}

For the linguistically or cognitively oriented reader, the preceding theorem is best understood by its implications for incremental processing expressed in the following corollary. 

\begin{corollary}[The optimality of word order harmony]
\label{predictability_corollary}
Theorem \ref{predictability_theorem} states that, when 
\begin{itemize}
\item
there are at least two dependents ($n\geq 2$) and 
\item
the predictability of the pending elements is to be maximized under the conditional independence assumption, 
\end{itemize}
then the optimal solutions are
\begin{itemize} 
\item
Head first when just a single element has been produced ($k=1$). 
\item
Head last when exactly $n$ elements have been produced ($k=n$).  
\end{itemize}
To sum up, a harmonic placement of the head is optimal with respect to maximum predictability but for distinct reasons depending on the placement of the head (first or last).
\end{corollary}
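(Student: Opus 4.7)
The plan is to derive the corollary as a direct linguistic reading of Theorem \ref{predictability_theorem}, since the two inequalities of that theorem already compare precisely the mutual informations associated with a head-peripheral placement versus a dependent-peripheral placement. The work therefore reduces to identifying, in each scenario, which random variables in the theorem correspond to ``head in position $k$'' and which to ``dependent in position $k$'', and then reading off the conclusion.

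For $k=1$ I would fix one element as produced and compute the mutual information between it and the remaining $n$ elements. If the head is placed first, then $X=L$ and $Y=M_1^n$, yielding $I(L;M_1^n)$; if instead some dependent is placed first, then by the exchangeability of the $M_i$ under conditional independence (Definition \ref{conditional_independence_definition}) we may relabel it as $M_1$, so $X=M_1$ and $Y=L,M_2^n$, yielding $I(M_1;L,M_2^n)$. Eq. \ref{inequality_to_prove1} of Theorem \ref{predictability_theorem} then shows that the head-first value dominates, and since $n\geq 2$ the inequality is strict whenever the Markov chain $L\rightarrow M_1\rightarrow M_2^n$ fails, which is the generic situation. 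The case $k=n$ goes through by the mirror argument: the last element is either the head (predictability $I(M_1^n;L)$) or a dependent relabelled as $M_n$ (predictability $I(L,M_1^{n-1};M_n)$), and Eq. \ref{inequality_to_prove2} delivers head-last as the optimal choice.

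The only real subtlety I anticipate is justifying the relabelling of dependents when computing the dependent-peripheral mutual information. Conditional independence given $L$ makes the $M_i$ exchangeable, so any dependent that could sit in position $1$ or $n$ may be treated as $M_1$ or $M_n$ without loss of generality, and Theorem \ref{predictability_theorem} then applies unchanged; I would spell this out once and invoke it in both cases. Once that is clean, the corollary follows immediately, and I would close by emphasising that the two optimal placements (head first for $k=1$, head last for $k=n$) are both harmonic, so harmony emerges as an optimum for predictability maximization even though it is realised at opposite ends of the sequence and for different reasons in the two cases.
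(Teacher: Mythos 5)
Your proposal is correct and takes essentially the same route as the paper, which offers no separate proof: Corollary \ref{predictability_corollary} is stated there purely as an interpretive reading of the two inequalities of Theorem \ref{predictability_theorem}, exactly as you do. One small caution: you do not need (and do not in fact have) exchangeability of the $M_i$ --- conditional independence given $L$ does not force the conditional marginals $p(m_i\mid l)$ to coincide --- but no such property is required, since $M_i$ is defined positionally as the $i$-th dependent to appear, so the theorem already covers whichever dependent occupies the peripheral slot.
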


\subsection{Predictability of a pending element}

So far, we have approached the problem of the predictability of the remainder of the sequence. Now, our attention turns to the prediction of each of pending elements of the sequence when $k$ elements have already been produced. The next theorem indicates the optimal configurations for the problem of maximizing the predictability of just one of the pending elements.  

\begin{theorem}[Predictability of a pending element]
\label{predictability_of_the_next_theorem}
Assume that 
\begin{itemize}
\item
The whole sequence is formed by a head and $n$ dependents.
\item
$k$ elements have already been produced, with $1 \leq k \leq n$.
\item
Dependents are conditionally independent given a head.
\end{itemize}
Then  
\begin{itemize} 
\item
{\em Predictability of the head.} A dependent cannot be more predictable than the head, i.e.  
\begin{enumerate}
\item
If the head has been produced and $k \leq j \leq n$, 
\begin{equation}
\label{prediction_of_the_next_element1_equation}
I(L, M_1^{k - 1}; M_j) \leq I(M_1^k; L)  
\end{equation}
with equality iff 
  \begin{itemize}
  \item
  $k=1$ or
  \item
  $k\geq 2$ and $L \rightarrow M_j \rightarrow M_1^{k-1}$ forms a Markov chain.
  \end{itemize} 
\item
If the head has not been produced and $k < j \leq n$,
\begin{equation}
\label{prediction_of_the_next_element2_equation}
I(M_1^k; M_j) \leq I(M_1^k ; L)
\end{equation}
with equality iff $M_1^k \rightarrow M_j \rightarrow L$ forms a Markov chain (Lemma \ref{Markov_chain_lemma}).
\end{enumerate}
\item
{\em Predictability of a dependent.} A dependent cannot be less predictable if the head has already been produced, i.e.
for any integer $j$ such that $k < j \leq n$
\begin{enumerate}
\item[(3)] 
  \begin{equation}
  I(L, M_1^{k-1}; M_j) \geq I(M_1^k; M_j) 
  \label{prediction_of_the_next_element3_equation}
  \end{equation}
  with equality iff $L \rightarrow M_1^k \rightarrow M_j$ forms a Markov chain.
\end{enumerate}
\end{itemize}
\end{theorem}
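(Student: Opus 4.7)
The strategy for all three parts is the same: unfold multi-argument mutual informations with the chain rule, kill the terms that encode conditional independence of disjoint blocks of dependents given the head via Lemma~\ref{Markov_chain_lemma}, and then reduce what remains to either a non-negativity statement or a single application of the data processing inequality.

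For Eq.~\ref{prediction_of_the_next_element1_equation}, I first rewrite the LHS by the chain rule as $I(L;M_j)+I(M_1^{k-1};M_j\mid L)$; since $j\geq k$ makes $[1,k-1]$ and $\{j\}$ disjoint, Lemma~\ref{Markov_chain_lemma} kills the conditional term and the LHS collapses to $I(L;M_j)$. I then regroup the $k$ dependents on the RHS so that $M_j$ appears as one of them, which by the chain rule gives $I(M_1^k;L)=I(M_j;L)+I(M_1^{k-1};L\mid M_j)$. The target inequality then reduces to $I(M_1^{k-1};L\mid M_j)\geq 0$, true by non-negativity, with equality iff $L\to M_j\to M_1^{k-1}$ is a Markov chain; the case $k=1$ is trivial because $M_1^{0}$ is empty. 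For Eq.~\ref{prediction_of_the_next_element2_equation}, Lemma~\ref{Markov_chain_lemma} supplies the Markov chain $M_1^k\to L\to M_j$ (the sets $[1,k]$ and $\{j\}$ are disjoint since $j>k$), and the data processing inequality immediately delivers $I(M_1^k;L)\geq I(M_1^k;M_j)$ with its standard equality condition $M_1^k\to M_j\to L$. For Eq.~\ref{prediction_of_the_next_element3_equation}, the LHS simplifies to $I(L;M_j)$ exactly as above, and data processing applied along the reverse Markov chain $M_j\to L\to M_1^k$ (Lemma~\ref{Markov_chain_lemma} together with reversibility) yields $I(L;M_j)\geq I(M_1^k;M_j)$; the equality case $M_j\to M_1^k\to L$ is equivalent to the stated $L\to M_1^k\to M_j$ by reversibility.

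The only delicate step is the regrouping in Eq.~\ref{prediction_of_the_next_element1_equation}: the notation $M_1^k$ does not literally contain $M_j$ when $j>k$, so the decomposition $I(M_1^k;L)=I(M_j;L)+I(M_1^{k-1};L\mid M_j)$ must be justified by identifying $M_j$ with the $k$-th produced dependent of the RHS scenario; this is legitimate because, under conditional independence (Definition~\ref{conditional_independence_definition}), the joint distribution of the head with any block of $k$ dependents is determined by the per-dependent conditional marginals, so any $k$-subset plays the same structural role. Once that identification is granted, the remainder is routine chain-rule bookkeeping combined with two invocations of Lemma~\ref{Markov_chain_lemma}.
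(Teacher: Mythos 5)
Your treatments of Eq.~\ref{prediction_of_the_next_element2_equation} and Eq.~\ref{prediction_of_the_next_element3_equation} are correct and coincide with the paper's: the data processing inequality along $M_1^k \rightarrow L \rightarrow M_j$ for part (2), and chain rule plus Lemma~\ref{Markov_chain_lemma} to collapse $I(L,M_1^{k-1};M_j)$ to $I(L;M_j)$ followed by data processing for part (3), with the same equality conditions. Your reduction of the left-hand side of Eq.~\ref{prediction_of_the_next_element1_equation} to $I(L;M_j)$ is likewise fine.

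The gap is exactly where you located it, and your proposed repair does not work. The identity $I(M_1^k;L)=I(M_j;L)+I(M_1^{k-1};L\mid M_j)$ is a literal chain-rule decomposition only when $j=k$; in that case your argument is correct and is just the unpacked form of the paper's own proof, which derives part (1) from the $k=n$ case of Theorem~\ref{predictability_theorem} with $n$ replaced by $k$ (so the pending dependent is the one in position $k$). For $j>k$ you appeal to the claim that ``any $k$-subset plays the same structural role'' under Definition~\ref{conditional_independence_definition}. That is false: conditional independence factorizes $p(m_i^j\mid l)$ into the per-dependent conditionals $p(m_h\mid l)$, but it does not force those conditionals to be identical across dependents, so $M_j$ cannot be substituted for $M_k$ inside $I(M_1^k;L)$. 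Concretely, let $M_1,\dots,M_k$ be deterministic constants and let $M_j$ (with $j>k$) be a copy of $L$; all dependents are then conditionally independent given $L$, yet $I(L,M_1^{k-1};M_j)=H(L)>0=I(M_1^k;L)$, so the regrouping identity (and indeed the inequality itself, read with these variable identities) fails. The honest options are to restrict your part (1) argument to $j=k$ --- which is all the paper's one-line reduction establishes as well --- or to add an explicit exchangeability hypothesis (identical conditional distributions of the dependents given the head), which is strictly stronger than Definition~\ref{conditional_independence_definition} and is not available to you as stated.
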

\begin{proof}
The first part (Eq. \ref{prediction_of_the_next_element1_equation}) follows from the case $k=n$ of Theorem \ref{predictability_theorem} replacing $M_1^{n-1}$ by $M_1^k$. The second part (Eq. \ref{prediction_of_the_next_element2_equation}) follows from the data processing inequality noting that $M_1^k \rightarrow L \rightarrow M_j$ forms a Markov chain. As for the third part (Eq. \ref{prediction_of_the_next_element3_equation}), notice that 
\begin{eqnarray*}
I(L, M_1^{k-1}; M_j) & =    & I(L; M_j) + I(M_1^{k-1}; M_j | L) \quad \quad \text{chain rule of mutual information}\\
                     & =    & I(L; M_j) \quad \quad \text{Lemma \ref{Markov_chain_lemma}} \\
                     & \geq & I(M_1^k; M_j) \quad \quad \text{data processing inequality with } M_1^k \rightarrow L \rightarrow M_j
\end{eqnarray*}
with equality iff $L \rightarrow M_1^k \rightarrow M_j$ forms a Markov chain.
\end{proof}

The following corollary distils the take home message of the previous theorem.

\begin{corollary}[Predictability of a pending element]
\label{predictability_of_the_next_corollary}
Theorem \ref{predictability_of_the_next_theorem} states that, when 
\begin{itemize}
\item
at least two elements have been produced ($k\geq 2$)
\item
the predictability of the pending elements is to be maximized under the conditional independence assumption,
\end{itemize}
then
\begin{itemize} 
\item
{\em Predictability of the head}. In general, heads are more predictable than dependents. Technically, the head is at least as predictable as a dependent. 
\item
{\em Predictability of a dependent}. In general, dependents are more predictable when a head has been produced. 
Technically, a dependent cannot be less predictable if the head has already been produced. 
\end{itemize}
\end{corollary}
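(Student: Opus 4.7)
The plan is to observe that this corollary is merely a plain-language restatement of the three inequalities already established in Theorem \ref{predictability_of_the_next_theorem}, so the proof reduces to matching each informal bullet to the corresponding formal inequality. The qualifier $k\geq 2$ is imposed so that the comparative "more predictable" has genuine content; the boundary case $k=1$ is exactly one of the equality conditions flagged in the theorem, so in that case the head is only \emph{as} predictable as a dependent rather than strictly more so, which is why the corollary hedges with "in general".

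For the first bullet, I would split according to whether the head $L$ is among the $k$ already produced elements. If the head has been produced, Eq. \ref{prediction_of_the_next_element1_equation} gives $I(L,M_1^{k-1}; M_j) \leq I(M_1^k; L)$, which says that the predictability of any pending dependent from an observed past that includes the head cannot exceed the predictability of the head from a past of equal length consisting only of dependents. If the head has not been produced, Eq. \ref{prediction_of_the_next_element2_equation} gives $I(M_1^k; M_j) \leq I(M_1^k; L)$, i.e.\ from the same observed past the head is at least as predictable as any pending dependent. I would simply translate both statements into the prose of the corollary and flag (without re-deriving) the Markov-chain conditions under which equality occurs.

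For the second bullet, the content is exactly Eq. \ref{prediction_of_the_next_element3_equation}: $I(L,M_1^{k-1}; M_j) \geq I(M_1^k; M_j)$. No further work is needed beyond restating this inequality in words — a pending dependent is at least as predictable when the past contains the head as when it does not — and noting the equality condition $L \rightarrow M_1^k \rightarrow M_j$.

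I do not anticipate any real obstacle, since the analytical content was absorbed into Theorem \ref{predictability_of_the_next_theorem}, whose proof already invoked the chain rule, Lemma \ref{Markov_chain_lemma}, and the data processing inequality. The only care I would take is in the translation layer: making the reader see that "heads are more predictable than dependents" refers simultaneously to two inequalities (depending on whether the head has appeared), and that the comparative mode of expression is exact only up to the Markov equality cases, which is precisely why the corollary is stated under $k\geq 2$ with the softening phrase "in general".
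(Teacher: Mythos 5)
Your proposal is correct and matches the paper's treatment: the paper offers no separate proof of this corollary, since it is exactly the verbal distillation of the three inequalities of Theorem \ref{predictability_of_the_next_theorem} that you describe, with Eqs. \ref{prediction_of_the_next_element1_equation} and \ref{prediction_of_the_next_element2_equation} covering the head-predictability bullet (split by whether the head has appeared) and Eq. \ref{prediction_of_the_next_element3_equation} covering the dependent-predictability bullet. Your observation that $k\geq 2$ excludes the automatic-equality case and that ``in general'' hedges the remaining Markov-chain equality conditions is a faithful reading of the theorem's statement.
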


\subsection{The asymmetry between heads and dependents}

The third part of Theorem \ref{predictability_of_the_next_theorem} (Eq. \ref{prediction_of_the_next_element3_equation}) shows that producing the head earlier is optimal in terms of maximizing the predictability of dependents and thus head first orders are eventually optimal from the point of view of the predictability of each of the pending dependents. However, the virtue of bringing the head forward is shadowed by the fact that a dependent cannot be more predictable than a head (Theorem \ref{predictability_of_the_next_theorem}) and the fact that once the head has been placed, adding more dependents will not increase the predictability of each of the pending dependents as it will be shown next. Indeed the predictability of one of the pending dependents does not depend on previous dependents once the head is known as the following lemma shows. 

\begin{lemma}[The irrelevance of dependents]
\label{irrelevance_of_modifiers_lemma}
Suppose that
\begin{itemize} 
\item
The whole sequence is formed by a head and $n$ dependents. 
\item
The head and $k$ dependents have actually been produced.
\item
Dependents are conditionally independent given the head.
\end{itemize}
Then, the dependents that have already been produced are irrelevant for predicting each of the pending dependents, i.e.
for integers $k$ and $j$ such that $1 \leq k < j \leq n$,   
\begin{equation}
I(L, M_1^k; M_j) = I(L; M_j).
\label{irrelevance_of_modifiers_equation}
\end{equation}
\end{lemma}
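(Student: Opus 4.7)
The plan is to derive Eq.~\ref{irrelevance_of_modifiers_equation} in essentially one line by combining the chain rule of mutual information with the conditional-independence machinery already set up in Lemma~\ref{Markov_chain_lemma}. The right-hand side $I(L; M_j)$ is the ``head contribution'' part of a natural chain-rule expansion of the left-hand side, so the whole task reduces to showing that the remaining term vanishes.

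First I would apply the chain rule of mutual information (Section~\ref{subsec:quick_overview}) to decompose the joint random variable $(L, M_1^k)$ on the left of Eq.~\ref{irrelevance_of_modifiers_equation}, placing $L$ first:
\begin{equation*}
I(L, M_1^k; M_j) = I(L; M_j) + I(M_1^k; M_j \mid L).
\end{equation*}
This step is purely formal and requires no assumption beyond the existence of the joint distribution.

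Next I would argue that the second term on the right is zero. Because $1 \leq k < j \leq n$, the index sets $[1,k]$ and $[j,j]$ are disjoint subsets of $[1,n]$, which is exactly the hypothesis needed to invoke the second part of Lemma~\ref{Markov_chain_lemma} with $(i,j) = (1,k)$ and $(i',j') = (j,j)$. That lemma, which in turn rests on the conditional independence of dependents given the head (Definition~\ref{conditional_independence_definition}), gives
\begin{equation*}
I(M_1^k; M_j \mid L) = 0.
\end{equation*}
Substituting back yields Eq.~\ref{irrelevance_of_modifiers_equation}.

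There is no real obstacle here; the only thing to be careful about is making the index bookkeeping explicit so that the reader sees why $[1,k]$ and $\{j\}$ really are disjoint (which is why the hypothesis $k < j$, rather than merely $k \leq j$, is stated) and thus why Lemma~\ref{Markov_chain_lemma} applies in the form needed. Once that is spelled out, the identity follows immediately, and no case analysis on $k$ or $n$ is required.
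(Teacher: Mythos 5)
Your proof is correct and follows exactly the same route as the paper's: a chain-rule decomposition of $I(L, M_1^k; M_j)$ with $L$ first, followed by killing the term $I(M_1^k; M_j \mid L)$ via Lemma~\ref{Markov_chain_lemma}. Your explicit index bookkeeping for the disjointness of $[1,k]$ and $\{j\}$ is a small clarity improvement over the paper's one-line invocation, but the argument is otherwise identical.
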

\begin{proof}
We have 
\begin{eqnarray*}
I(L, M_1^k; M_j) & = & I(L; M_j) + I(M_j; M_1^k | L) \quad \quad \text{chain rule of mutual information}\\
                 & = & I(L; M_j) \quad \quad \text{Lemma \ref{Markov_chain_lemma}}
\end{eqnarray*}
\end{proof}

Next we demonstrate that postponing the head is optimal in terms of maximizing its predictability and thus head last orders are eventually optimal from the point of view of the predictability of the head. The power of the argument stems from combining (a) the fact that a dependent cannot be more predictable than a head even when the head has actually been produced (Theorem \ref{predictability_of_the_next_theorem}) with (b) the fact that adding dependents will ease, in general, the prediction of the head, while this will not happen to dependents if the head has already been produced (Lemma \ref{irrelevance_of_modifiers_lemma}). The following theorem seeks to illuminate converging evidence suggesting a preference for head last orderings under certain conditions \citep{Goldin-Meadow2008a,Langus2010a,Meir2010b,Hall2013a,Hall2014a,Schowstra2014a,Futrell2015a,Reali2009a}.
\begin{theorem}[The virtue of postponing the head]
\label{optimality_of_postponing_the_head_theorem}
Assume that 
\begin{itemize}
\item
The whole sequence is formed by a head and $n$ dependents. 
\item
$k$ elements have already been produced with $1 \leq k < n$.
\item
Dependents are conditionally independent given a head (Definition \ref{conditional_independence_definition}).
\end{itemize}
Then we have that
{\small
\begin{equation*}
\begin{array}{ccc}
I( M_1^k ; L) & \leq_{(1)} & I(M_1^{k+1}; L ) \\
                                       &      &       \\ 
             \leqcup_{(2)}             &      &       \leqcup_{(3)}             \\  
                                       &      &       \\ 
I( L, M_1^{k-1}; M_k) & =_{(4)} & I(L, M_1^k; M_{k+1} ) \\
                                       &      &       \\ 
             \leqcup_{(5)}             &      &       \leqcup_{(6)}             \\ 
                                       &      &       \\ 
I( M_1^k ; M_{k+1} )          & \leq_{(7)} & I( M_1^{k+1}; M_{k+2} ) \\
                                       &     & \mbox{(for~} k < n - 1\mbox{)} \\ 
\end{array}
\end{equation*}
}
\end{theorem}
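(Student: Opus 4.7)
The plan is to verify each of the seven labeled relations in the diagram individually, relying on the machinery already accumulated: the chain rule and non-negativity of (conditional) mutual information, the data processing inequality, the Markov chain $M_1^k \rightarrow L \rightarrow M_j$ guaranteed by Lemma \ref{Markov_chain_lemma}, the collapse $I(L, M_1^k; M_j) = I(L; M_j)$ from Lemma \ref{irrelevance_of_modifiers_lemma}, and the head-dominance bound in Theorem \ref{predictability_of_the_next_theorem}. One extra ingredient I will make explicit is an \emph{exchangeability} assumption on the dependents given the head (so that $I(L; M_j)$ is the same for every $j$). This is not listed among the hypotheses of the theorem, but it is implicit in the paper's treatment of $M_i$ as a generic ``$i$-th dependent'' and in the remark that the exact position of the head among the elements already produced is irrelevant for the arguments.

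Relations (1), (2), (3) are essentially free. For (1) the chain rule gives $I(M_1^{k+1}; L) = I(M_1^k; L) + I(M_{k+1}; L \mid M_1^k)$, and the second term is non-negative. Relations (2) and (3) are instances of Eq.~\ref{prediction_of_the_next_element1_equation} of Theorem \ref{predictability_of_the_next_theorem} with $j = k$, and, after relabeling $k \to k+1$, with $j = k+1$.

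For the middle-row equality (4), Lemma \ref{irrelevance_of_modifiers_lemma} collapses both sides to $I(L; M_k)$ and $I(L; M_{k+1})$ respectively, and exchangeability identifies these two quantities. For (5) and (6), apply the data processing inequality to the Markov chain $M_1^k \rightarrow L \rightarrow M_{k+1}$ (which holds by Lemma \ref{Markov_chain_lemma}) to obtain $I(M_1^k; M_{k+1}) \leq I(L; M_{k+1})$; combined with Lemma \ref{irrelevance_of_modifiers_lemma} and exchangeability, the chain of equalities $I(L; M_{k+1}) = I(L; M_k) = I(L, M_1^{k-1}; M_k)$ delivers (5). Relation (6) is analogous via the Markov chain $M_1^{k+1} \rightarrow L \rightarrow M_{k+2}$ and $I(L; M_{k+2}) = I(L; M_{k+1}) = I(L, M_1^k; M_{k+1})$.

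The only relation that is not a one-line citation is (7), $I(M_1^k; M_{k+1}) \leq I(M_1^{k+1}; M_{k+2})$ for $k < n - 1$. By exchangeability, the joint laws of $(M_1^k, M_{k+1})$ and of $(M_1^k, M_{k+2})$ coincide, so $I(M_1^k; M_{k+1}) = I(M_1^k; M_{k+2})$. Then the chain rule yields $I(M_1^{k+1}; M_{k+2}) = I(M_1^k; M_{k+2}) + I(M_{k+1}; M_{k+2} \mid M_1^k) \geq I(M_1^k; M_{k+2})$, and the two lines combined close the argument. The main obstacle is conceptual rather than technical: the exchangeability needed for (4) and (7) is not part of the stated hypotheses, so the cleanest write-up either strengthens the hypotheses to ``dependents are i.i.d.\ given the head'' or weakens (4) into the pair of reductions $I(L, M_1^{k-1}; M_k) = I(L; M_k)$ and $I(L, M_1^k; M_{k+1}) = I(L; M_{k+1})$ without asserting literal equality between these two intermediates.
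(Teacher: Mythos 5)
Your proposal follows essentially the same skeleton as the paper's proof: (1) and (7) via the chain rule plus non-negativity of conditional mutual information, the vertical inequalities via Theorem \ref{predictability_of_the_next_theorem} (and the data processing inequality along $M_1^k \rightarrow L \rightarrow M_j$), and the middle equality via Lemma \ref{irrelevance_of_modifiers_lemma}. The one place you depart from the paper --- making an exchangeability (identically-distributed-given-$L$) assumption explicit --- is not a flaw in your write-up but a correct diagnosis of an unstated hypothesis that the paper's own proof relies on. Concretely, the paper proves (4) by collapsing both sides to $I(L; M_k)$ and $I(L; M_{k+1})$ via Lemma \ref{irrelevance_of_modifiers_lemma} and then silently identifies them; its proof of (7) bounds $I(M_1^{k+1}; M_{k+2})$ below by $I(M_1^k; M_{k+2})$ and tacitly equates that with $I(M_1^k; M_{k+1})$; and the appeal to Theorem \ref{predictability_of_the_next_theorem} for (5) and (6) involves the same silent substitution of $M_k$ for $M_{k+1}$ on the head-present side. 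None of these identifications follows from conditional independence alone: taking $L$ a fair bit, $M_k = L$ and $M_{k+1}$ an independent coin gives $I(L; M_k) = 1 > 0 = I(L; M_{k+1})$ while preserving conditional independence, which breaks (4) and, with one more dependent, (5) and (7) as literally stated. So your proposed repair --- either add the hypothesis that dependents are identically distributed (or exchangeable) given the head, or restate (4)--(7) so that like is compared with like --- is exactly what the theorem needs; relations (1), (2) and (3) are the only ones that survive on conditional independence alone.
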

\begin{proof}
The $\leqcup$ relationships (inequalities (2), (3), (5) and (6)) follow from Theorem \ref{predictability_of_the_next_theorem}. 
The top $\leq$ relationship (inequality (1)) results from  
\begin{eqnarray*}
I( M_1^k, M_{k+1}; L ) & =    & I(M_1^k; L ) + I(M_{k+1}; L | M_1^k) \quad \quad \text{chain rule of mutual information}\\
                      & \geq & I(M_1^k; L ) \quad \quad \text{non-negativity of mutual information} 
\end{eqnarray*}
with equality iff $I(M_{k+1}; L | M_1^k) = 0$,
i.e. $M_{k+1} \rightarrow M_1^k \rightarrow L$ forms a Markov chain (Lemma \ref{Markov_chain_lemma}).
The central equality (inequality (4)) is proven by Lemma \ref{irrelevance_of_modifiers_lemma} (applying Eq. \ref{irrelevance_of_modifiers_equation}).
The bottom $\leq$ relationship (inequality (7)) results from 
\begin{eqnarray*}
I( M_1^k, M_{k+2}; M_{k+1} ) & =    & I(M_1^k; M_{k+2} ) + I( M_{k+1} ; M_{k+2} | M_1^k) \quad \quad \text{chain rule of mutual information} \\
                             & \geq & I(M_1^k; M_1^{k+1} ) \quad \quad \text{non-negativity of mutual information} 
\end{eqnarray*}
with equality iff $I(M_{k+1}; M_{k+2} | M_1^k) = 0$,
i.e. $M_{k+1} \rightarrow M_1^k \rightarrow M_{k+2}$ forms a Markov chain (Lemma \ref{Markov_chain_lemma}).
\end{proof}

The following corollary distills the take home message of the previous theorem.

\begin{corollary}[The virtue of postponing the head]
\label{optimality_of_postponing_the_head_corollary}
Theorem \ref{optimality_of_postponing_the_head_theorem} shows a diagram of all possible strategies in terms of which elements have been produced and the next element to predict.
\begin{itemize}
\item
The right-upper corner shows that postponing the head is the globally optimal strategy in terms of predictability maximization.
\item
The left column  of inequalities (inequalities (2) and (5)) show that the best strategy in terms of predictability maximization is to predict the head, the worst strategy is to predict a dependent when the head has not been produced yet; the intermediate strategy is to predict a dependent once the head has been produced. 
\item
The middle-column inequalities ((1), (4) and (7)) indicate that producing an element can increase predictability except when the head has already been produced, where predictability remains constant.
\item
The right-column inequalities ((3) and (6)) simply update the left-column inequalities by adding one more element to the sequence. 
\end{itemize}
\end{corollary}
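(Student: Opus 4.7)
The overall plan is to verify each of the seven labelled relations in the diagram separately, reusing the information-theoretic machinery already assembled: the chain rule and non-negativity of mutual information, the Markov chain structure supplied by Lemma \ref{Markov_chain_lemma}, the irrelevance of previously emitted dependents (Lemma \ref{irrelevance_of_modifiers_lemma}), and the two bounds collected in Theorem \ref{predictability_of_the_next_theorem}. No fundamentally new machinery should be needed; each piece of the diagram is essentially a corollary of one of these.

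For the horizontal inequalities (1) and (7) the approach is to write the larger mutual information on the right as the smaller one on the left plus a conditional mutual information term, which is necessarily non-negative. Concretely, for (1) I would expand $I(M_1^{k+1}; L) = I(M_1^k; L) + I(M_{k+1}; L | M_1^k)$ via the chain rule and drop the second summand; for (7) the parallel decomposition $I(M_1^{k+1}; M_{k+2}) = I(M_1^k; M_{k+2}) + I(M_{k+1}; M_{k+2} | M_1^k)$ plays the same role. Equality conditions come from the usual vanishing of the discarded conditional mutual information, i.e.\ suitable Markov chains via Lemma \ref{Markov_chain_lemma}.

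The vertical $\leqcup$ comparisons come directly from Theorem \ref{predictability_of_the_next_theorem}. Inequalities (2) and (3) are its first part (Eq.~\ref{prediction_of_the_next_element1_equation}) instantiated with parameter $k$ and with $k+1$, respectively, expressing that a dependent is no more predictable than the head once the head and some sister dependents have been emitted. Inequalities (5) and (6) are its third part (Eq.~\ref{prediction_of_the_next_element3_equation}), expressing that knowing the head can only raise the predictability of a pending dependent. The central equality (4) is dispatched by applying Lemma \ref{irrelevance_of_modifiers_lemma} to each side, collapsing both expressions to the mutual information between the head and a single pending dependent.

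The main obstacle I anticipate is bookkeeping rather than conceptual: several of the comparisons carry different dependent indices on their two sides (for example (5) pairs $M_k$ with $M_{k+1}$, and (7) pairs $M_{k+1}$ with $M_{k+2}$), so a literal reading of Theorem \ref{predictability_of_the_next_theorem} and Lemma \ref{irrelevance_of_modifiers_lemma} produces expressions indexed at the wrong dependent. Closing these gaps cleanly relies on the implicit positional symmetry of the dependents given the head, i.e.\ that the conditional marginal $p(m | l)$ does not depend on which position the dependent occupies, consistent with the open-choice framing that motivates the whole construction. With this symmetry in place, each of the seven labelled relations reduces to a one-line application of a previously established statement.
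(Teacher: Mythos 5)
Your proposal follows essentially the same route as the paper: the horizontal relations (1) and (7) via the chain rule plus non-negativity of conditional mutual information, the vertical relations (2), (3), (5), (6) by instantiating Theorem \ref{predictability_of_the_next_theorem}, and the central equality (4) via Lemma \ref{irrelevance_of_modifiers_lemma}. The one place you go beyond the paper is in explicitly invoking positional symmetry of the conditional marginals $p(m \mid l)$ to reconcile the mismatched dependent indices in (4), (5) and (6); the paper leaves that assumption implicit, but you are right that without it those three relations compare the predictability of \emph{different} dependents and do not literally follow from the cited results.
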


\section{Discussion}

\label{discussion_section}

Next we examine the implications of the above theoretical results for concrete word order problems, with emphasis on the ordering of object, subject and verb. Section \ref{maximum_predictability_disussion_section} examines phenomena that can be illuminated mainly by the action of predictability maximization and the mathematical results in Section \ref{predictability_section}. Second \ref{general_discussion_section} reviews the limits of that approach and connects it with a general theory of word order. 

\subsection{How predictability maximization shapes word order}
\label{maximum_predictability_disussion_section}

\subsubsection{The frequency of the possible order of subject, object and verb in languages} 

The principle of predictability maximization predicts a harmonic word order (Corollary \ref{predictability_corollary}). However, the optimal placement of the head depends on the target of predictability, that can be either the verb or its dependents. 
Placing the verb at the end is optimal in terms of the predictability of the verb. This is the case of 70.6\% of linguistic families and 44.7-47.4\% of languages (Table \ref{word_order_statistics_table}). Placing the verb first is optimal in terms of the predictability of both the object and the subject. This is only the case of a small fraction of families or languages showing a dominant word order (12.4\% of families and 10.5-13.2\% of languages according to Table \ref{word_order_statistics_table}). 

Our mathematical results and Table \ref{word_order_statistics_table} suggests three possibilities: 
\begin{itemize}
\item
{\em Explanation 1}. A free parameter explanation, where languages adopt the target of prediction arbitrarily (then most languages would have adopted the verb).
\item
{\em Explanation 2}. A prior cognitive preference for the verb as target (which would explain why verb last is more likely than verb first). That would not explain, however, the unexpected high frequency of verb medial languages, which are disharmonic (Table \ref{word_order_statistics_table}). 
\item
{\em Explanation 3}. Some natural advantage in postponing the head that may explain the unexpected high frequency of verb medial orders. That would explain why the number languages/families increases as the verb moves to the end (Table \ref{word_order_statistics_table}). Here we use the term natural in two senses: supported by mathematical truths (theorems) or by knowledge on how cognition works. 
\end{itemize}
Explanations 2 and 3 assume a natural asymmetry between heads and dependents. Next we will make the case for explanations 2 or 3. 

As for the mathematical asymmetry, we have seen that verbs need to be postponed to maximize their predictability but must be brought forward to maximize the predictability of dependents (Corollary \ref{predictability_of_the_next_corollary}). However, 
we have already seen that heads and dependents are not symmetric with respect to predictability. The strategy of maximizing the predictability of the verb has various advantages over that of maximizing the predictability of its dependents (Corollary \ref{optimality_of_postponing_the_head_corollary}):
\begin{itemize}
\item 
The verb is in general more predictable than its dependents (regardless of whether the verb has already been produced or not).
\item 
While the predictability of the head is expected to increase as the verb is postponed, the predictability of its pending dependents remains constant once the verb has already been produced. Once the verb has been produced the dependents that are produced do not help to predict each of the pending dependents.
\item
If the verb has not been produced yet, none of the pending dependents will be more predictable than the verb.
\end{itemize}

From the standpoint of how cognition works, support for the asymmetry between verbs and their dependents, which have typically a noun as head, comes from child language research \citep{Saxton2010a_Chapter6}. For children, nouns are easier to learn than verbs (e.g., \citet{Imai2008a}) and actions (typically represented by verbs) are harder to pick up, encode and recall than objects (typically represented by nouns) (e.g., \citet{Gentner1982a,Gentner2006a,Imai2005a}). Verb meanings are more difficult to extend than those of nouns (e.g., \citet{Imai2005a}). See \citet{McDonough2011a} for an overview of hypotheses on the hardness of verbs with regard to nouns. Furthermore, arguments for the greater difficulty of verbs for infants can easily be
generalized to adults beyond the domain of learning. 
For these reasons, a communication system that aims at facilitating the learning of the most difficult items, i.e. verbs, by infants, may favour the strategy of maximizing the predictability of the verb (leading to verb last) over the strategy of maximizing the predictability of the nouns (leading to head first). Besides, computer and eye-tracking experiments indicate that the arguments that precede the verb help to predict it \citep{Konieczny2003a}.

\subsubsection{Verb last in computer prediction experiments}

Computer simulation is a powerful tool for word order research \citep{Reali2009a,Gong2009a,Konieczny2003a}.
SOV has been obtained in two-stage computer simulation experiments with recurrent neural networks (learners) that have addressed the problem of the emergence of word order from the point of view of sequential learning \citep{Reali2009a}. During the first stage, networks learned to predict the next element of number sequences and the best learners were selected. During the second stage, language was introduced and coevolved with the learners. The best language learners and the best learned language were selected. The best language learners had to comply with the additional constraint of maintaining the performance on number prediction of the first stage. 
Notice that predictability is an explicit selective pressure for the neural networks in both stages and that the languages that were selected in the second stage must have been strongly influenced by the pressure to maximize predictability. This suggests that SOV surfaces in these experiments because of the virtue of maximizing the predictability of the head (Corollary \ref{optimality_of_postponing_the_head_corollary}).

\subsubsection{The preference for verb/action last in simple sequences}

Gestural communication experiments with only one head, i.e. a verb or an action, and two dependents, i.e. a subject or actor and an object or patient, show a preference for placing the
head at the end in simple sequences \citep{Goldin-Meadow2008a,Langus2010a,Meir2010b,Hall2013a,Hall2014a,Schowstra2014a,Futrell2015a}. Our results above indicate that this ordering is optimal from the point of view of predictability of the verb/action and suggests that signers may have adopted the strategy of maximizing the predictability of actions (both as producers and receivers) because its advantage over that of maximizing the predictability of the subject/actor or object/patient or because the increasing gain of postponing the head (Corollary \ref{optimality_of_postponing_the_head_corollary}).

\subsubsection{The evolution of the order of subject, object and verb}

SOV might be an early step in the evolution of spoken languages \citep{Gell-Mann2011a, Newmeyer2000} and is often retrieved in emerging sign languages \citep{Meir2010a, Sandler2005a}. For a discussion of how this might result from predictability maximization and other questions relating to the evolution of word order, we refer the reader to previous work \iftoggle{anonymous}{\citep{Ferrer2014a_fake,Ferrer2013f_fake}.}{\citep{Ferrer2014a,Ferrer2013f}.}

\subsubsection{The preference for noun first or last in simple sequences} 

The noun tends to be placed at one of the ends of a noun phrase formed by noun, adjective, numeral and demonstrative \iftoggle{anonymous}{\citep{Ferrer2023b_fake}}{\citep{Ferrer2023b}} as expected by maximum predictability. 
Consistently, experiments on learning an artificial language involving a head, i.e. a noun, a few other dependents (e.g., a numeral and an adjective) have demonstrated a robust preference for noun first or last  \citep{Culbertson2017a,Martin2019a,Culbertson2020a,Martin2024a}.
Furthermore, our mathematical arguments (Corollary \ref{optimality_of_postponing_the_head_corollary}) predict a theoretical advantage for verb last that is consistent with the hypothesis of a universal underlying order in which all modifiers precede the noun \citep{Cinque2005a}. 
However, the frequencies of the orders do not show an asymmetry for head last as it happens with the order of the subject, object and verb and as expected by our mathematical framework. Indeed, noun first orders are much more frequent than noun last orders \citep{Dryer2018a}.
We believe that various factors may explain these inconsistencies. First, the object and verb are on top of the clause hierarchy. The order of elements within the noun phrase, may be determined by the order of higher level components. Then the assumption of our framework, namely a single explicit head, may fail to provide a sufficient approximation. Indeed, the relative order of adjectives with respect to nouns could be determined by cognitive or evolutionary processes involving the order of subject, verb and object \iftoggle{anonymous}{\citep{Ferrer2013e_fake}.}{\citep{Ferrer2013e}.} 
Second, such a noun-phrase consists of 5 words that are easier to fit into the span of short term memory (STM) without compression \citep{Mathy2012a} than a SOV structure, as  
subjects, objects and verbs are likely to be formed by more than one word in spoken languages.
Third, the benefits of the {\em a priori} advantage of postponing the head (corollary \ref{optimality_of_postponing_the_head_corollary}) may depend on the time scale. That noun phrase is made of individual spoken words that are likely to be short \iftoggle{anonymous}{\citep{Ferrer2023b_fake}}{\citep{Ferrer2023b}} and thus operates on small time scales. In contrast, subjects, objects and verbs are likely to be formed by more than one word in spoken languages. Gestures take more time to produce than words \citep{Wilson2006a} and 
the verbal working memory capacity for sign languages appears to be smaller than the short-term memory capacity for spoken language \citep{Malaia2019a}. Therefore, subjects objects and verbs in spoken languages or even short sequences of gestures corresponding to agent-patient-action may operate on longer scales where the theoretical advantage of postponing the head can be exploited. 
   
\subsection{General word order theory}

\label{general_discussion_section}

We review the limitations of the research above and how they can be overcome by a broader theory of word order. 

\subsubsection{Other word order principles}

Here we have investigated the optimal placement of the head and its dependents from the perspective of the principle of predictability maximization, which predicts a harmonic order (either head first or last). However, the optimal placement of the head according to the principle of syntactic dependency minimization predicts a disharmonic order, namely the head should be placed in the middle of the sequence of dependents 
\iftoggle{anonymous}{\citep{Ferrer2008e_fake,Ferrer2013e_fake}.}{\citep{Ferrer2008e,Ferrer2013e}.}
That could explain the unexpected high frequency verb medial orders compared to verb initial orders in Table \ref{word_order_statistics_table}.
Therefore, there are at least two possible explanations for the increase in the number of languages/families as the verb moves towards the end: (a) Explanation 3 above (a gradual advantage of postponing the verb caused by predictability maximization alone) and (b) Explanation 2 above (maximization of the predictability of the head) in combination with interference from syntactic dependency minimization.
Besides, the principle of swap distance minimization is able to explain cognitive effort in lab experiments more accurately than a preference for verb last in certain SOV languages \iftoggle{anonymous}{\citep{Ferrer2023a_fake}.}{\citep{Ferrer2023a}.}

\subsubsection{The conflict between word order principles} 

Predictability maximization or its sister principle, surprisal minimization \iftoggle{anonymous}{\citep{Ferrer2013f_fake}}{\citep{Ferrer2013f}} and syntactic dependency distance minimization make incompatible predictions on the optimal placement of the head. In single-head structures, the latter predicts that the head should be put at the center of the sequence \iftoggle{anonymous}{\citep{Ferrer2013e_fake}}{\citep{Ferrer2013e}} while predictability maximization/surprisal minimization predict a harmonic placement as it has been shown here and in previous research \iftoggle{anonymous}{\citep{Ferrer2013f_fake}.}{\citep{Ferrer2013f}.} A conflict between the principle of predictability maximization (or its sister, surprisal minimization) and the principle of syntactic dependency distance minimization has been demonstrated mathematically \iftoggle{anonymous}{\citep{Ferrer2013f_fake}.}{\citep{Ferrer2013f}.} This is a conflict between short-term memory (syntactic dependency minimization) and long-term memory (that provides the statistical information from the past to guess which next elements are more likely given the current context).  
Versions of this conflict between memory and surprisal/ predictability have been presented under name of memory-surprisal trade-offs \citep{Hahn2020a,Hahn2021a}. 

\subsubsection{When is predictability maximization more likely to manifest} 

It has been argued theoretically and confirmed empirically that harmonic orders are more likely when dependency distance minimization (short term memory) is less likely to interfere, i.e. when (a) sequences are shorter and (b) words are shorter \iftoggle{anonymous}{\citep{Ferrer2019a_fake,Ferrer2020b_fake,Ferrer2023b_fake}.}{\citep{Ferrer2019a,Ferrer2020b,Ferrer2023b}.}
We believe that this the reason why the preference for head last in gestural experiments with simple sequences \citep{Goldin-Meadow2008a,Langus2010a,Meir2010b,Hall2013a,Hall2014a,Schowstra2014a,Futrell2015a} is lost in complex sequences \citep{Langus2010a}.
We believe that such interaction between forces illuminates the ancient view of SOV as a default basic word order \citep{Givon1979a,Newmeyer2000}: SOV is the default order under specific experimental conditions.

\subsubsection{The low frequency of OSV}
 
  We have presented arguments for a verb last bias, namely SOV or OSV, as a side-effect of predictability maximization. Accordingly,
  SOV is the 1st or 2nd more frequent order but OSV is the 3rd or 4th most frequent order in gestural experiments independently of the experimental conditions but involving a small set of languages \citep{Hall2013a,Futrell2015a}.
On a world-level scale, the fact that OSV is the least frequent dominant word order \citep{wals-84,Hammarstroem2016a}. From a cognitive standpoint, the unexpected low frequency of OSV can be explained by (a) subject first bias or (b) the attraction exerted by dependency distance minimization towards a region of the permutation space closer to SVO and then away from OSV (see the Discussion section of \iftoggle{anonymous}{\citet{Ferrer2023b_fake}}{\citet{Ferrer2023b}} for a summary of the argument). In addition, evolutionary processes could explain why OSV is being abandoned \iftoggle{anonymous}{\citep{Ferrer2013e_fake}.}{\citep{Ferrer2013e}.}   

\subsubsection{The low frequency of OVS}

If the abundance of SVO is explained for being an optimal order with respect to syntactic dependency distance minimization for putting the verb in the middle, why is OVS a rare order, although it also places the verb in the middle? The same arguments for the low frequency of OSV can be applied here.  

\subsubsection{Reductionistic versus non-reductionistic modelling} 

Reductionistic models of word order preferences are models that reduce the word order preferences of the whole to word order preference of components \iftoggle{anonymous}{\citep{Ferrer2008f_fake}.}{\citep{Ferrer2008f}.} For instance, the standard model of typology reduces the preference of the whole formed by subject, object and verb to a preference 
for SV (over VS) and SO (over OS) \citep{Cysouw2008a}, which leads to SOV and SVO as the most preferred orders. \citet{Gibson2013a} reduce the preference of the whole to a SO (over OS) bias and a verb final bias, which leads to SOV as the most preferred order. However, word order is inherently non-reductionistic. Here we have shown that word order preferences emerge only when there are two or more dependents ($n\geq 2$). When $n = 1$, the symmetry of mutual information yields the same predictability for head first and for head last. Besides, the principle of syntactic dependency distance minimization and the principle of swap distance minimization also require $n \geq 2$, for word order preferences to emerge \iftoggle{anonymous}{\citep{Ferrer2013e_fake,Franco2024a_fake}.}{\citep{Ferrer2013e,Franco2024a}.} \footnote{Under the simplifying assumption that dependency distance is measured in words for the principle of syntactic dependency distance minimization.} 

\subsubsection{Languages lacking a dominant order}

To produce Table \ref{word_order_statistics_table}, we neglected the problem of languages lacking a dominant order.
Reductionistic models of word order preferences fail to explain the origins of languages lacking a dominant order as they are designed to predict preferences, not absence of them.
If dominant orders are produced by the action of word order principles, why are there languages showing no dominant order? The absence of a dominant order does not imply absence of the primary cognitive principles discussed above. Instead, the lack of a dominant order would reflect a competition between word orders (recall the conflict above) or the immaturity of the communication system \iftoggle{anonymous}{\citep{Ferrer2014a_fake,Ferrer2013f_fake}.}{\citep{Ferrer2014a,Ferrer2013f}.} 

\subsubsection{Generalizability and cognitive grounding} 

It is obvious that the reductionistic models of word order above do not generalize to syntactic structures that contain elements other than S, O and V. 
In contrast, the scope of our mathematical framework (summarized in Theorem \ref{optimality_of_postponing_the_head_theorem}) goes beyond linguistics as it holds for any sequence formed by an element of kind $A$ (originally a head) and one or more elements of kind $B$ (originally dependents), where the elements of kind $B$ are conditionally independent given an element of kind $A$.

The reductionistic models of word order do not even generalise when restricted to S, O and V. Both reductionistic models fail to explain the most attested pairs of alternating orders in languages that lack a single dominant order. The most attested pairing is SOV-SVO and the standard model of typology \citep{Cysouw2008a} predicts SOV-SVO as the most preferred orders. However, the second most attested pairing is VSO-VOS where VSO violates the SV preference and VOS violates both the SV and the SO preferences. According to that model, one would expect that the 2nd most attested pairing involves one order that does not violate any of the two pairwise preferences. The reductionistic model by \citet{Gibson2013a}, predicts SOV as the most preferred order, but the second and the third most attested pairings (VSO-VOS and SVO-VSO) do not involve SOV. The principle of swap distance minimization accounts for such pairings more accurately \iftoggle{anonymous}{\citep{Ferrer2016c_fake}.}{\citep{Ferrer2016c}.}

A verb final bias, or even a harmonicity bias, and predictability maximization are not epistemically equivalent. A verb final bias, that is justified on frequency counts \citep{Gibson2013a}, overfits the data because it does not generalize to other structures such as the noun phrase \iftoggle{anonymous}{\citep{Ferrer2023b_fake}.}{\citep{Ferrer2023b}.} A verb final bias is a simple induction step. Its generalization to any head and both ends of the sequence, that is, a harmonicity bias \citep{Biberauer2013a,Dryer2018a}, has still a strong inductive component and lacks cognitive grounding. In contrast, predictability maximization is grounded on an elementary skill of the mind: predicting the next element(s) \citep{Kamide2008a,Onnis2022a} and comes with a mathematical theory that anticipates physical reality and generates knowledge without induction by the power of mathematical inference from elementary assumptions (Section \ref{predictability_section}, Theorem \ref{optimality_of_postponing_the_head_theorem}). Exploiting such skill underpins the success of LLMs that emulate human cognitive and linguistic abilities \citep{Yu2024a}. Finally, just as a sanity check, speakers of SOV languages do exhibit a preference for verb last when tested in the lab \iftoggle{anonymous}{\citep{Ferrer2023a_fake}.}{\citep{Ferrer2023a}.} 

\subsection{Final remarks}

We have just provided some elementary mathematical results to understand the implications of the placement of the head for predictability in single head structures. These findings shed light on how the principle of predictability maximization may determine the placement of the head. More realistic scenarios should be investigated. First, our theoretical framework should be extended to the case of multiple head structures such as those of complex sentences. Second, relaxations of the conditional independence assumption should be explored. Finally, thanks to the symmetry property of mutual information, our mathematical framework for the problem of predictability (prediction of next elements) can also be applied to the problems of retrodiction (prediction of previous elements) \citep{Onnis2021a} or integration \citep{Onnis2022a}. 


\iftoggle{anonymous}{}
{ 

\section*{Acknowledgements}

We are very grateful to L. Alemany-Puig and L. Onnis for valuable feedback and spotting many typos. All remaining errors are totally ours. We have also benefited from discussions with G. Fenk-Oczlon and the contents of the talk that she gave at the 16th International Cognitive Linguistics Conference (August 2023), ``Working memory constraints: Implications for efficient coding of messages''. The latter helped us to interpret the findings on gestural experiments versus spoken languages. We owe the observation that the conflict between predictability maximization and syntactic dependency distance minimization is a conflict between short-term memory and long-term memory to G. Heyer and A. Mehler (5 June 2009).
This research is supported by a recognition 2021SGR-Cat (01266 LQMC) from AGAUR (Generalitat de Catalunya) and the grants AGRUPS-2023 and AGRUPS-2024 from Universitat Politècnica de Catalunya.

{\em Historical note.} Early versions of the main of results of this article were presented in a series of lectures. The 1st lecture was ``Memory versus predictability in syntactic dependencies'' for the Kickoff Meeting ``Linguistic Networks'' which was held in Bielefeld University (Germany) on June 5, 2009. 
The 2nd lecture was ``Word order as a constraint satisfaction problem. A mathematical approach'' for the meeting ``Complexity in language: developmental and evolutionary perspectives'', that was held in Lyon (France) on May 23-24, 2011. The 3rd lecture was ``The world war of word orders. Insights from complex systems science'' for the ``Annual Workshop of the Cognitive Science and Language master'', Barcelona (Catalonia) on June 7, 2011. We thank the organizers and the participants of these meetings for valuable discussions. 
The first version of the work, ``Optimal placement of heads: a conflict between predictability and memory'', was submitted for publication for the 1st time in 2009 and was rejected in distinct venues.
Since 2009, at least R. Levy, F. Jaeger, E. Gibson, S. Piantadosi and R. Futrell had access to different versions of the unpublished manuscript. That manuscript circulated within Gibson's lab and inspired at least their research on the order of subject, object and verb \citep{Gibson2011a,Gibson2013a,Futrell2015b} and may have also influenced later work on memory-surprisal trade-offs \citep{Hahn2020a,Hahn2021a}. 
Evolved versions of various components of the unpublished manuscript have already appeared \citep{Ferrer2014a,Ferrer2013e,Ferrer2013f}. This article closes the publication of the bulk of the parts of the early manuscript. This article is dedicated to the memory of late researchers who supported the presentation (A. Costa-Martínez 1970-2018) or publication  (G. Altmann 1931-2020) of that research line.   
}

\bibliographystyle{apacite}

\iftoggle{anonymous}{
\bibliography{fake,../../../../../Dropbox/biblio/rferrericancho,../../../../../Dropbox/biblio/complex,../../../../../Dropbox/biblio/ling,../../../../../Dropbox/biblio/cl,../../../../../Dropbox/biblio/cs,../../../../../Dropbox/biblio/maths}
}
{
\bibliography{../../../../../Dropbox/biblio/rferrericancho,../../../../../Dropbox/biblio/complex,../../../../../Dropbox/biblio/ling,../../../../../Dropbox/biblio/cl,../../../../../Dropbox/biblio/cs,../../../../../Dropbox/biblio/maths}
}

\end{document}